\documentclass[letterpaper, 10 pt, conference]{ieeeconf}
\IEEEoverridecommandlockouts

\usepackage{cite}
\usepackage{amsmath,amssymb,amsfonts}
\usepackage{algorithmic}
\usepackage{graphicx}
\usepackage{textcomp}
\usepackage[dvipsnames]{xcolor}

\usepackage{booktabs}
\makeatletter
\let\NAT@parse\undefined
\makeatother
\usepackage[bookmarks=true, colorlinks=false, hidelinks, backref=false]{hyperref}
\usepackage[capitalise,noabbrev]{cleveref}
\usepackage{url}
\usepackage[font=footnotesize,labelfont=bf]{caption}
\usepackage{macros}

\def\BibTeX{{\rm B\kern-.05em{\sc i\kern-.025em b}\kern-.08em
    T\kern-.1667em\lower.7ex\hbox{E}\kern-.125emX}}

\usepackage{newtxtext}
\usepackage{newtxmath}
\usepackage{bm}

\title{\LARGE \bf
On the Stability and Realizability of Recurrent \\Polynomial
Surrogate Ternary Logic Gate Networks
}

\author{Sai Sandeep Damera, Ryan Matheu, Aniruddh G. Puranic, John S. Baras and Calin Belta%
\thanks{The authors are with the University of Maryland, College Park, USA. Emails: \{\texttt{sdamera}, \texttt{rmatheu}, \texttt{puranic}, \texttt{baras}, \texttt{calin}\}\texttt{@umd.edu}.}%
}

\begin{document}

\maketitle
\thispagestyle{empty}
\pagestyle{empty}

\begin{abstract}
Recurrent Neural Networks (RNNs) can learn to predict Signal Temporal Logic (STL)
verdicts online from partial trajectories, but deploying them as runtime
monitors in safety-critical systems demands more than predictive accuracy.
When sensor inputs degrade or become unavailable, standard architectures such
as RNNs offer no structural guarantee that their outputs degrade
gracefully; a dropped input can silently flip a verdict from safe to unsafe.
We introduce the Recurrent Differentiable Ternary Logic Gate Network
(R-DTLGN), a recurrent architecture that operates over Kleene's three-valued
logic $\bm{\{-1, 0, +1\}}$, where $\bm{0}$ explicitly represents \emph{unknown}. The
R-DTLGN trains through continuous polynomial surrogates and hardens to a
discrete ternary logic circuit at inference.
We analyze the hardened circuit through two gate vocabularies derived from two
orderings on the ternary domain: numerically monotone gates ensure stable
recurrent dynamics, while information-monotone gates, when present, guarantee
principled abstention (unknown inputs never produce wrong outputs) and
monotonicity in input certainty (more information can only improve the
verdict). We show that the recurrent connections required by bounded STL
operators use exclusively AND and OR, which belong to both vocabularies,
linking the monitoring task's structure to the architecture's guarantees.
A realizability bound derived from the STL formula's temporal operators
directly sizes the network's hidden state, replacing hyperparameter search
with a formula-driven specification. We evaluate these results on STL
specifications over D4RL PointMaze navigation data, testing prediction
accuracy, degradation behavior under predicate dropout, and the
accuracy-versus-safety tradeoff between two label construction pipelines.
The R-DTLGN is, to our knowledge, the first recurrent architecture that
couples learned temporal prediction with formal degradation guarantees rooted
in three-valued logic.
\end{abstract}

\section{Introduction}\label{rdtlgn:sec:introduction}
Monitoring a safety specification online requires evaluating temporal
properties as data arrives. Signal Temporal Logic
(STL)~\cite{maler2004monitoring} provides a formal language for such
properties over real-valued signals, and its quantitative robustness
semantics assign a continuous score measuring the degree of satisfaction
or violation. For bounded temporal operators (e.g.,
$\square_{[0,5]}\,\mu_{\mathrm{safe}}$), the ground-truth robustness at
time $t$ depends on signal values up to $t{+}5$, which a causal monitor
cannot observe. Tools such as STLCG++~\cite{kapoor2025stlcg++} compute the exact robustness offline; when restricted to the available observations, they define a non-learned baseline that can evaluate only the portion of the
specification whose temporal horizon has already elapsed.

\begin{figure}
    \centering
    \includegraphics[width=1\linewidth]{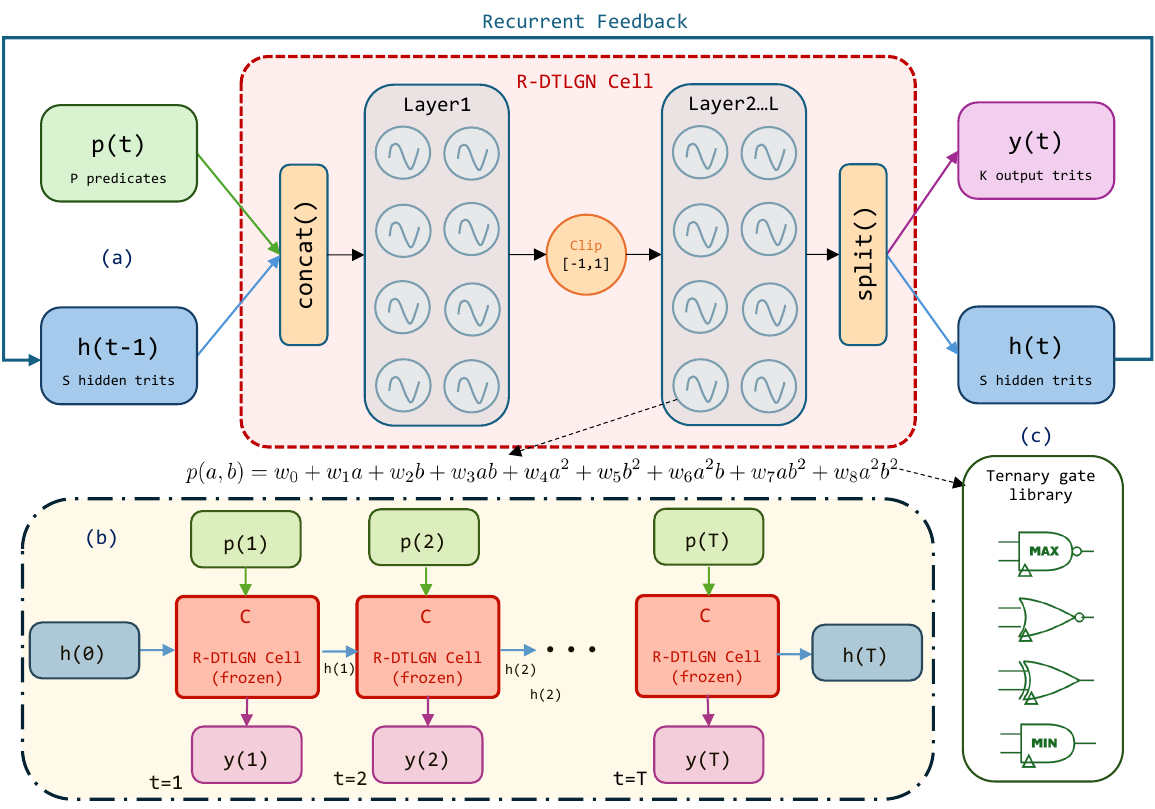}
    \caption{Schematic of the Recurrent Differentiable Ternary Logic Gate Network. (a) The R-DTLGN cell. (b) Unrolled cell. (c) Hardening into Logic Circuit.}
    \label{fig:r-dtlgn}
    \vspace{-0.25cm}
\end{figure}

The gap between this causal baseline and the oracle (full-trajectory)
evaluation is fundamental: it exists for every bounded formula with a
non-trivial future time horizon. Any \emph{learned} monitor attempts to
bridge this gap by exploiting spatio-temporal patterns in training data
to predict verdicts that a causal evaluator cannot resolve from the
available observations alone. Standard recurrent architectures such as
LSTM and GRU can serve
this role, but they provide no structural guarantees about what happens
when input information degrades. Their continuous-valued hidden states
carry no inherent distinction between ``determined'' and ``uncertain,''
so degraded inputs (sensor dropout, indeterminate predicates) can
produce arbitrarily wrong outputs.

Kleene's strong three-valued logic~\cite{fitting1994kleene} is the
natural formalism for reasoning about temporal properties under partial
information. It extends classical Boolean logic with a third value,
$0$, representing ``unknown.'' The Kleene connectives (AND, OR, NOT)
propagate this unknown value conservatively: a conjunction with one
false input is false regardless of the other, but a conjunction of true
and unknown yields unknown. This conservative propagation is precisely
what a monitor operating under input uncertainty should exhibit.

A learned monitor that operates natively in Kleene's three-valued domain
acquires two complementary capabilities. First, like any learned monitor,
it exploits temporal patterns in training data to \emph{predict} verdicts
that a causal evaluator cannot resolve; this prediction capability is
what makes it useful over the causal baseline. Second, unlike standard
recurrent architectures, it provides structural \emph{degradation
guarantees}: when input information is lost (predicates become
indeterminate, sensors fail), the monitor's verdicts degrade toward
``unknown'' rather than flipping to wrong answers. The prediction
capability bridges the causal gap; the degradation guarantee ensures
robustness to input-level uncertainty. These two capabilities are
complementary and non-overlapping.

We propose the Recurrent Differentiable Ternary Logic Gate Network
(R-DTLGN) as a learned monitor that operates natively in Kleene's
domain. Differentiable Logic Gate Networks
(DLGNs)~\cite{petersen2022deep,petersen2024convolutional} replace
conventional arithmetic neurons with discrete two-input logic gates,
producing pure logic circuits at inference. Polynomial Surrogate
Training (PST)~\cite{damera2026polynomialsurrogatetrainingdifferentiable} 
extends DLGNs to Kleene's ternary domain $\Tset = \{-1, 0, +1\}$ by
representing each neuron as a degree-$(2,2)$ polynomial with 9 learnable
coefficients; at inference, the network \emph{hardens} to an exact
ternary logic circuit (Section~\ref{rdtlgn:subsec:pst}). The R-DTLGN
augments this feedforward architecture with a persistent hidden state
for temporal reasoning: at each timestep the cell ingests the current
predicate values and its previous state, processes them through $L$
layers of ternary PST gates, and produces an updated state and output
verdict. The hidden state is initialized to $(0,\ldots,0)$, the
all-unknown state, so the monitor begins with no prior commitment.
Because every neuron in the hardened circuit is a ternary gate on
$\Tset$, the R-DTLGN's inference-time computation is a composition of
Kleene connectives, inheriting the conservative uncertainty propagation
that standard RNNs lack.

\paragraph*{Contributions}
We analyze the R-DTLGN as a causal STL monitor and provide four contributions:

\noindent\textbf{1)~The R-DTLGN architecture.} We introduce the Recurrent
    Differentiable Ternary Logic Gate Network, a recurrent cell whose
    neurons are degree-$(2,2)$ polynomial surrogates during training and
    exact ternary logic gates at inference. Operating natively in Kleene's
    three-valued domain, the R-DTLGN combines the predictive capacity of
    a learned monitor with the conservative uncertainty propagation of
    ternary logic. We introduce a novel hardening routine for converting the soft learned network into logic gate circuits in the recurrent setting based on trajectory distillation.

    \noindent\textbf{2)~Degradation guarantees from Kleene structure.} We prove
    that R-DTLGN cells composed of information-monotone gates satisfy two per-timestep
    properties: \emph{principled abstention} (all-unknown inputs produce
    all-unknown outputs) and \emph{input certainty monotonicity}
    (replacing a determined input with unknown can only make outputs less
    determined, never flip them). These properties guarantee graceful
    degradation under input loss, a structural advantage over standard recurrent
    monitors.

    \noindent\textbf{3)~Realizability bounds from STL formula structure.} Given
    a bounded STL formula $\varphi$, we derive a lower bound $B(\varphi)$
    on the cell's hidden state dimension, decomposed additively by
    temporal operator type. This connects STL formula complexity to neural
    architecture sizing, replacing hyperparameter search with a
    formula-driven specification.

    \noindent\textbf{4)~Empirical validation.} On a suite of 6 bounded STL
    specifications over PointMaze navigation data, we evaluate both
    \emph{prediction performance} (R-DTLGN vs.\ causal STLCG++ vs.\
    vanilla RNN) and \emph{degradation behavior} (predicate dropout
    experiments confirming principled abstention and input certainty
    monotonicity in the learned cell).
\section{Related Work}
\label{rdtlgn:sec:related}

\subsection{STL Monitoring and the Causal Gap}
\label{rdtlgn:subsec:rel-stl}
Offline STL monitoring tools such as
Breach~\cite{donze2010breach}, S-TaLiRo~\cite{annpureddy2011s}, 
STLCG++~\cite{kapoor2025stlcg++} compute quantitative robustness given 
the complete signal trajectory. Online monitoring algorithms process the 
signal incrementally but still require future samples to resolve bounded 
temporal operators: the robustness of $\square_{[0,b]}\,\mu$ at time $t$ 
depends on values up to $t{+}b$. Deshmukh et al.~\cite{deshmukh2017robust}
formalized robust online monitoring of partial traces, where the
monitor may lack sufficient data to determine the Boolean verdict.
Several recent approaches train neural networks to predict STL
robustness from partial trajectories~\cite{ma2020stlnet} using LSTM
or GRU cells as the recurrent backbone. These learned monitors bridge the causal gap
by exploiting spatio-temporal patterns in data, but they lack structural
guarantees about their behavior under input degradation.

\subsection{Three-Valued Logic in Formal Verification}
\label{rdtlgn:subsec:rel-3val}

Kleene's strong three-valued logic~\cite{fitting1994kleene} extends
classical Boolean logic with a third truth value representing
``unknown''. Its connectives are the tightest sound extension of
Boolean operations to partial information: a conjunction of true and
unknown yields unknown, but a conjunction with false yields false
regardless of the other operand. This conservative propagation
principle has made three-valued logic a standard tool in formal
verification, where it arises whenever an abstraction or partial
observation leaves the truth value of a proposition undetermined.
The algebraic structure is that of a De Morgan
lattice~\cite{cignoli1975injective}, with two distinct orderings:
a numerical (truth) ordering and an information (knowledge) ordering.
The information ordering, where $0$ (unknown) is the least element,
is the one relevant to degradation analysis.
This paper brings the same algebraic structure to \emph{learned}
monitors: the R-DTLGN operates natively in Kleene's domain,
inheriting conservative uncertainty propagation as a structural
property rather than an approximation artifact.

\subsection{Differentiable Logic Gate Networks}
\label{rdtlgn:subsec:rel-dlgn}

Differentiable Logic Gate Networks
(DLGNs)~\cite{petersen2022deep} replace conventional arithmetic neurons
with discrete two-input logic gates, training through continuous
relaxations of a categorical gate distribution. Each neuron maintains a
softmax distribution over the 16 Boolean gates, selecting the mode at
inference. Convolutional extensions~\cite{petersen2024convolutional} and
subsequent work on connection optimization demonstrate competitive
accuracy at extreme parameter efficiency. The softmax-over-gates paradigm is feasible because the
Boolean gate space is small ($2^{2^2} = 16$), but it becomes
intractable for ternary logic ($3^{3^2} = 19{,}683$ gates). Polynomial
Surrogate Training (PST)~\cite{damera2026polynomialsurrogatetrainingdifferentiable} 
overcomes this via a direct polynomial parameterization
(Section~\ref{rdtlgn:subsec:pst}), matching binary DLGN accuracy
while training $2$--$7\times$ faster. B\"uhrer et
al.~\cite{buhrer2025recurrent} introduced recurrent DLGNs for sequential
tasks over Boolean values but do not address the hardness fragility that arises in recurrent settings. All prior DLGN work is either feedforward or
operates in the binary domain. Our proposed R-DTLGN combines
recurrence with the ternary PST framework, and its dynamics under the
information ordering have not been previously analyzed.

\paragraph*{Gap addressed by this work} No existing learned STL monitor has a structural 
connection to the three-valued logic that underlies STL semantics under partial
information. The R-DTLGN is, to our knowledge, the first learned
architecture whose inference-time computation operates natively in
Kleene's three-valued domain, providing provable degradation guarantees
that complement its learned prediction capability.

\section{Preliminaries}\label{rdtlgn:sec:prelim}

\subsection{Signal Temporal Logic (STL)}\label{rdtlgn:subsec:stl}

A discrete-time signal is a function $x:\N\to\R^n$ mapping discrete-time indices to a real-valued system state, where $\N$ is the set of all nonnegative integers. STL~\cite{maler2004monitoring} is a modal logic for specifying temporal behaviors of real-valued signals over bounded time intervals. The primitives in STL are \emph{predicate functions}, responsible for mapping a signal to the Boolean domain. A predicate function has the general form $\mu:=f(x(t))\geq0$, where $f:\R^n\to\mathbb{R}$ is a continuous function. STL formulas are defined recursively over the following grammar:
\begin{equation*}\label{rdtlgn:eq:stl-syntax}
    \varphi \;:=\; \mu \,\mid\, \neg\varphi \,\mid\,
    \varphi_1 \wedge \varphi_2 \,\mid\,
    \square_{[a,b]}\,\varphi \,\mid\,
    \lozenge_{[a,b]}\,\varphi \,\mid\,
    \varphi_1\, \mathbf{U}_{[a,b]}\, \varphi_2
\end{equation*}
where $\square_{[a,b]}$ is the unary \emph{always} operator, $\lozenge_{[a,b]}$ is the unary \emph{eventually} operator, $\mathbf{U}_{[a,b]}$ is the binary \emph{until} operator, and $[a,\,b]$ is a bounded time interval with $a, b \in \N$, $a \leq b$. For example, the formula $(x_1(t)\leq 0)\mathbf{U}_{[0,\,5]}\square_{[0,\,2]}(x_2(t)\geq2)$ states that $x_1$ must remain less than or equal to zero until, within 0 to 5 time steps, $x_2$ becomes and remains greater than or equal to two for 3 time steps.

The \emph{quantitative robustness} $\rho(\varphi, x, t) \in \R$
measures the degree of satisfaction at time $t$:
    \begin{equation}
        \label{rdtlgn:eq:stl-robustness}
        \begin{aligned}
            \rho(\mu,\,x,\,t) &= f(x(t)), \\
            \rho(\neg\varphi,\,x,\,t) &= -\rho(\varphi,\,x,\,t), \\
            \rho(\varphi_1 \wedge \varphi_2,\,x,\,t)
                &= \min\bigl(\rho(\varphi_1,\,x,\,t),\; \rho(\varphi_2,\,x,\,t)\bigr), \\
            \rho(\square_{[a,b]}\,\varphi,\,x,\,t)
                &= \min_{\tau \in [t+a,\, t+b]} \rho(\varphi,\,x,\,\tau), \\
            \rho(\lozenge_{[a,b]}\,\varphi,\,x,\,t)
                &= \max_{\tau \in [t+a,\, t+b]} \rho(\varphi,\,x,\,\tau),
        \end{aligned}
    \end{equation}
A positive robustness ($\rho > 0$) implies satisfaction; a negative
robustness ($\rho < 0$) implies violation. The sign of $\rho$ is
sound: $\sign(\rho(\varphi, x, t)) = +1$ if and only if the Boolean
semantics evaluate to true.

For bounded operators, evaluating $\rho$ at time $t$ requires signal
values up to $t{+}b$. A causal monitor operating at time $t$ has access
only to the available observations $\{x(0), \ldots, x(t)\}$ and therefore cannot
compute the oracle robustness for formulas with non-trivial future
horizons.

\subsection{The Ternary Domain and Its Two Orderings}
\label{rdtlgn:subsec:orderings}

The ternary domain $\Tset = \{-1, 0, +1\}$ admits two natural partial orders with different roles. The numerical ordering governs training-time stability; the information ordering governs
inference-time degradation guarantees.

\begin{definition}[Numerical ordering]\label{rdtlgn:def:numerical-ordering}
The \emph{numerical ordering} $\leq$ on $\Tset$ is the restriction of
the standard order on $\R$: $-1 \leq 0 \leq +1$. This is a total
order with bottom $-1$, top $+1$, meet $\min$, and join $\max$.
\end{definition}

\begin{definition}[Information ordering]\label{rdtlgn:def:info-ordering}
The \emph{information ordering} (Kleene ordering) $\sqsubseteq$ on
$\Tset$ is defined by:
$0 \sqsubseteq -1$, $0 \sqsubseteq +1$, with $-1$ and $+1$
incomparable. The value $0$ represents \emph{unknown} and is the
unique bottom element $\bot = 0$. The values $\pm 1$ are maximal but
incomparable: they represent different determined values, not different
levels of certainty. This is a \emph{flat pointed partially ordered set}: an antichain
with a bottom element, but \emph{not} a lattice (the join
$(-1) \vee (+1)$ does not exist).
\end{definition}

Both orderings extend componentwise to $\Tset^n$. In particular,
$\Tset^S$ under $\sqsubseteq$ has bottom
$\bot = (0, \ldots, 0)$ (the ``all-unknown'' state).

\begin{definition}[Monotonicity under each ordering]
\label{rdtlgn:def:monotonicity}
A function $g : \Tset^2 \to \Tset$ is \emph{numerically monotone} if
$a \leq a', b \leq b'$ implies $g(a,b) \leq g(a',b')$, and
\emph{information-monotone} if $a \sqsubseteq a', b \sqsubseteq b'$
implies $g(a,b) \sqsubseteq g(a',b')$.
\end{definition}

The numerical ordering governs the continuous training dynamics: the
cell operates on $[-1,1]^S$ with real-valued activations, and
gradient-based analysis uses the Euclidean structure inherited from
$\R^S$. The information ordering governs the discrete inference
dynamics: after hardening, the cell operates on $\Tset^S$ and the
relevant structure is the pointed poset $(\Tset^S, \sqsubseteq)$ with
bottom $\bot = (0,\ldots,0)$ that
distinguishes ``unknown'' from ``determined''.

\subsection{Gate Vocabularies}\label{rdtlgn:subsec:two-vocabs}

The monotonicity definition (~\cref{rdtlgn:def:monotonicity})
applied under each ordering yields two gate vocabularies with distinct
guarantees. Of the $3^9 = 19{,}683$ two-input ternary gates, exactly
175 are \emph{numerically monotone} (NM) and exactly 197 are
\emph{information-monotone} (IM). Excluding the 3 constant gates leaves
172 NM non-constant and 194 IM non-constant gates.

In a recurrent circuit composed entirely of NM gates, the state-update
map is order-preserving on $[-1,+1]^S$, so Tarski's
theorem~\cite{tarski1955lattice} guarantees fixed-point existence and bounded orbits. NM gates are therefore the natural vocabulary for the recurrent path.
\begin{center}
    \begin{tabular}{c|c}
        $\neg$ & \\
        \hline
        $F$ & $T$ \\
        $U$ & $U$ \\
        $T$ & $F$
    \end{tabular}
    \quad
    \begin{tabular}{c|ccc}
        $\wedge$ & $F$ & $U$ & $T$ \\
        \hline
        $F$ & $F$ & $F$ & $F$ \\
        $U$ & $F$ & $U$ & $U$ \\
        $T$ & $F$ & $U$ & $T$
    \end{tabular}
    \quad
    \begin{tabular}{c|ccc}
        $\vee$ & $F$ & $U$ & $T$ \\
        \hline
        $F$ & $F$ & $U$ & $T$ \\
        $U$ & $U$ & $U$ & $T$ \\
        $T$ & $T$ & $T$ & $T$
    \end{tabular}
\end{center}

The 197 IM gates include all Kleene strong connectives: NOT, AND, OR (defined above),
NAND, NOR; XOR and XNOR under Kleene extension; IMPLIES and its
variants; both projections $\pi_1$ and $\pi_2$; and all three constant
gates. IM gates provide the
degradation guarantees developed in
Section~\ref{rdtlgn:subsec:degradation}. However, the majority of IM gates have
zero-absorbing rows ($g(0,b)=0$ for all $b$). A recurrent circuit of
pure IM gates cascades toward the all-unknown fixed point, producing
degenerate memory.

The two vocabularies overlap at exactly 20 gates (17 non-constant),
including AND, OR, both projections, and their complements. Gates in
this intersection, denoted NM$\,\cap\,$IM, inherit both recurrent
stability (from NM) and degradation guarantees (from IM).
Section~\ref{rdtlgn:sec:analysis} shows that bounded STL temporal
operators require only NM$\,\cap\,$IM gates in their recurrent
connections, making this intersection the ideal target vocabulary.

\subsection{Polynomial Surrogate Training (PST)}\label{rdtlgn:subsec:pst}

We now define the PST neuron, network architecture, training
objective, and hardening procedure used throughout the paper.

\paragraph{PST neuron}
Each two-input ternary gate $g : \Tset^2 \to \Tset$ is represented by
the polynomial $p_{\bm{w}} : \R^2 \to \R$ defined as:
\begin{equation}\label{rdtlgn:eq:pst-polynomial}
    p_{\bm{w}}(a, b)
    = \bm{w}^\top \bm{m}(a,b),
\end{equation}
where
$\bm{m}(a,b) = [1,\, a,\, b,\, ab,\, a^2,\, b^2,\, a^2 b,\,
ab^2,\, a^2 b^2]^\top$ is the monomial basis and
$\bm{w} \in \R^9$ are the learnable coefficients. The $9 \times 9$
Vandermonde matrix $V$ evaluating these monomials at the grid points
$\Tset^2$ is invertible, so the mapping between coefficients $\bm{w}$
and truth table values $\bm{t} = V\bm{w}$ is a bijection. The
polynomial is $C^\infty$-smooth and linear in $\bm{w}$, requiring no 
softmax temperatures or Gumbel relaxations.

\paragraph{Network architecture}
A feedforward PST-DTLGN is specified by its depth $L$, layer widths
$\{n_l\}_{l=0}^{L}$, a connectivity map $\mathbf{C}$ assigning two
parents to each neuron, and the polynomial coefficients
$\bm{w}_j^{(l)} \in \R^9$ for each neuron $j$ at layer $l$. The
forward pass applies a $\clip$ nonlinearity after each polynomial
evaluation:
\begin{equation}\label{rdtlgn:eq:pst-forward}
    h_j^{(l)} = \clip\!\Big(
        p_{\bm{w}_j^{(l)}}\!\big(h_{s_j}^{(\cdot)},\; h_{t_j}^{(\cdot)}\big)
    \Big),
\end{equation}
where $\clip(x) = \max(-1, \min(1, x))$ and $(s_j, t_j)$ are the
parent indices from $\mathbf{C}$. Clipping preserves the range
$[-1,1]$, prevents polynomial blowup across layers, and preserves the
grid points $\{-1,0,+1\}$ exactly.

\paragraph{Training}
The training objective combines a task loss with a commitment
regularizer that encourages polynomial evaluations at grid points to
approach valid ternary values:
\begin{equation}\label{rdtlgn:eq:pst-loss}
    \loss(\mathbf{W}) = \loss_{\mathrm{task}}(\mathbf{W})
    + \lambda(s) \cdot \mathcal{R}_A(\mathbf{W}),
\end{equation}
where $\mathcal{R}_A$ penalizes the distance from each neuron's soft
truth table to the nearest valid ternary truth table, and
$\lambda(s)$ is annealed from $\approx 0$ (free exploration) to
$\lambda_{\max}$ (strong commitment) during training.

\paragraph{Hardening}
At inference, each neuron is converted to a discrete ternary gate: its
polynomial is evaluated on the $3 \times 3$ grid $\Tset^2$, each entry
is rounded to the nearest element of $\Tset$, and the resulting truth
table is looked up in the gate library. Formally, the hardened
coefficients are $\bm{w}_j^{\mathrm{hard}} = V^{-1}
\operatorname{round}_{\Tset}(V \bm{w}_j)$. Since rounding always
produces a valid element of $\Tset^9$, every trained PST neuron
discretizes to one of the $19{,}683$ possible two-input ternary gates
without requiring a curated vocabulary. The hardened network is a pure
ternary logic circuit. This works in the feedforward setting but makes the hardening fragile in the recurrent setting, where per-neuron errors amplify in the recurrent loop.

\section{The Recurrent PST-DTLGN}\label{rdtlgn:sec:formulation}

An R-DTLGN cell is parameterized by four integers: $P$ (input
predicates), $S$ (hidden state trits), $K$ (output trits), and $L$
(layer depth), as described in \cref{fig:r-dtlgn}. At each timestep $t$, the cell receives a predicate
vector $p_t \in \R^P$ and the previous hidden state $h_{t-1} \in \R^S$,
and produces:

\begin{equation*}\label{rdtlgn:eq:cell-update}
\begin{aligned}
    z_t &= [p_t;\; h_{t-1}] \in \R^{P+S}, \;
    \tilde{h}_t = f^{(L)} \circ \cdots \circ f^{(1)}(z_t) \in \R^{S+K}, \\
    h_t &= \clip\bigl(\tilde{h}_t^{(1:S)},\, -1,\, +1\bigr), \;
    y_t = \tilde{h}_t^{(S+1:S+K)},
\end{aligned}
\end{equation*}
where each layer $f^{(\ell)}$ is a collection of two-input PST gates
(Section~\ref{rdtlgn:subsec:pst}), $\clip$ restricts activations to
$[-1,1]$, $h_t$ is the updated hidden state, and $y_t$ is the output.
The hidden state is initialized to $h_0 = (0, \ldots, 0)$, the
all-unknown state under the information ordering.

During training the cell is differentiable (each gate evaluates its
polynomial at continuous values in $[-1,1]$); at inference, the cell
is hardened to an exact ternary logic circuit via trajectory
distillation (Section~\ref{rdtlgn:subsec:traj-distill}).

\subsection{Recurrent Hardening via Trajectory Distillation}
\label{rdtlgn:subsec:traj-distill}

Per-neuron hardening (rounding each polynomial to its nearest gate
independently) described in~\cref{rdtlgn:subsec:pst} is inadequate for recurrent cells: each gate assignment
propagates through all subsequent timesteps, and locally optimal
rounding produces globally incoherent dynamics.

We replace per-neuron rounding with \emph{trajectory distillation}, a
greedy coordinate-descent procedure over a target vocabulary
$\mathcal{V}$. Given $N_{\mathrm{cal}}$ calibration trajectories from
the training data:
(i)~generate teacher verdicts
$\hat{y}^{(n)}_t \in \Tset$ by thresholding the soft network's output;
(ii)~warm-start the circuit by per-neuron rounding against the full
library ($19{,}683$ gates);
(iii)~sweep all gates in output-to-input order, replacing each with
the $\mathcal{V}$-candidate that minimizes verdict disagreement on the
calibration set;
(iv)~iterate until no sweep improves. Each sweep monotonically
decreases disagreement, guaranteeing convergence; $3$--$7$ sweeps
suffice in practice.

\paragraph{Two-phase vocabulary refinement.}
Restricting $\mathcal{V}$ directly to the 17 non-constant
NM$\,\cap\,$IM gates is too constrained, causing significant accuracy
loss. We therefore adopt a two-phase strategy:
\emph{Phase~1} sweeps with $\mathcal{V} = \mathcal{V}_{\mathrm{NM}}$
($172$ NM non-constant gates), eliminating the hardening gap and
guaranteeing stable dynamics
(Section~\ref{rdtlgn:subsec:two-vocabs}).
\emph{Phase~2} attempts to replace each NM-only gate with the nearest
NM$\,\cap\,$IM candidate, accepting the swap if accuracy loss is below
$\eta = 0.1$\,pp. This upgrades as many gates as possible to the
intersection vocabulary, extending degradation guarantees
(Section~\ref{rdtlgn:subsec:degradation}).

\subsection{The Causal STL Monitoring Task}\label{rdtlgn:subsec:learning-task}

Given a bounded STL formula $\varphi$ and a dataset of trajectories
$\{x^{(n)}\}_{n=1}^{N}$ with offline robustness labels
$\rho^{(n)}_t = \rho(\varphi, x^{(n)}, t)$, we seek an R-DTLGN cell
that produces ternary verdicts online from the available observations.
Concretely, at each timestep $t$, the cell observes only the current
predicate values $p_t = (\mu_1(x(t)), \ldots, \mu_P(x(t)))$ and its
own hidden state $h_{t-1}$, and outputs a verdict
$y_t \in \{-1, 0, +1\}$ approximating $\sign(\rho(\varphi, x, t))$.

The \emph{causal baseline} is the causal restriction of an exact STL
evaluator (e.g., STLCG++~\cite{kapoor2025stlcg++}):
it computes the standard robustness using only the available observations
$\{x(0), \ldots, x(t)\}$, producing correct verdicts only for formulas
whose temporal horizon has fully elapsed. The gap between this baseline
and the oracle (full-trajectory) evaluation is the space a learned
monitor must exploit. The R-DTLGN's training objective is a loss on
ternary labels derived from the oracle robustness $\rho^{(n)}_t$, with
two options for label construction described below.

The R-DTLGN is trained on ternary labels derived from the continuous
robustness signal, but two legitimate pipelines exist for constructing
these labels. The distinction is analogous to the
\emph{optimize-then-discretize} (OtD) vs.\
\emph{discretize-then-optimize} (DtO) choice in numerical methods.

\textbf{Compute-then-Quantize (CtQ)} evaluates STL robustness from
the continuous predicates, then quantizes the resulting scalar $\rho$
to a ternary label. \textbf{Quantize-then-Compute (QtC)} first rounds
each predicate to $\Tset$, then evaluates STL robustness from the
ternary predicates. CtQ labels reflect the full continuous-predicate
information; QtC labels reflect only what the ternary inputs can
structurally determine.

Since ternary rounding is sign-preserving
($\sign(\bar{\mu}_i) \in \{0,\,\sign(\mu_i)\}$), the two pipelines
can disagree only by expanding the unknown region:

\begin{prop}[QtC sign preservation]
\label{rdtlgn:prop:conservative-quant}
Let $\bar{\mu}_i : \R^n \to \Tset$ be a sign-preserving quantization
of each predicate. Then for any bounded STL formula $\varphi$ with
standard min/max semantics, the QtC robustness $\bar{\rho}$ satisfies
$\sign(\rho) \cdot \sign(\bar{\rho}) \geq 0$. That is, QtC can only
widen the unknown band relative to CtQ; it cannot flip a satisfaction
to a violation or vice versa.
\end{prop}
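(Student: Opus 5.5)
The plan is to reduce the statement to information-monotonicity of the Kleene connectives, using the fact that $\sign$ commutes with every operation in the robustness semantics~\eqref{rdtlgn:eq:stl-robustness}. I will actually prove the stronger invariant
\[
\sign(\bar\rho(\varphi,x,t)) \sqsubseteq \sign(\rho(\varphi,x,t)),
\]
where $\sqsubseteq$ is the information ordering of Definition~\ref{rdtlgn:def:info-ordering}. In words, $\sign(\bar\rho)\in\{0,\sign(\rho)\}$. This invariant immediately gives $\sign(\rho)\cdot\sign(\bar\rho)\geq 0$. I use the stronger form because the weaker product inequality is not obviously preserved under $\min$ and $\max$ on its own.

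\textbf{Step 1 (sign commutes with the semantics).} Since $\sign:\R\to\Tset$ is nondecreasing, I will note that for finite families
\[
\sign(\min_i r_i)=\min_i \sign(r_i),\quad \sign(\max_i r_i)=\max_i \sign(r_i),\quad \sign(-r)=-\sign(r).
\]
By structural induction on $\varphi$, it follows that $\sign(\rho(\varphi,x,t))$ is obtained by evaluating $\varphi$ with Kleene NOT, AND, and OR (that is, $-$, $\min$, $\max$ on $\Tset$) over the ternary values $\sign(\mu_i(x(\tau)))$. The same holds for $\bar\rho$, with inputs $\sign(\bar\mu_i(x(\tau)))=\bar\mu_i(x(\tau))$. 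The bounded operators $\square_{[a,b]}$ and $\lozenge_{[a,b]}$ are finite iterated $\min$ and $\max$. Bounded until is, under the standard semantics, a finite $\max$ of $\min$'s, so it is covered by the same case. Negation is handled by the sign-flip identity.

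\textbf{Step 2 (base case).} Sign preservation of the quantizer means exactly $\bar\mu_i \sqsubseteq \sign(\mu_i)$ componentwise. In particular, if $\mu_i=0$ then $\bar\mu_i=0$. This is the point where the hypothesis is used, and it is why the invariant must be stated with $\sqsubseteq$ rather than with the product inequality.

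\textbf{Step 3 (induction).} Kleene NOT, AND, and OR are information-monotone; the paper already lists them among the IM gates. I will check this quickly for binary $\min$; $\max$ then follows by duality through negation, and iterated versions follow by associativity. The check for $\min(a,b)$ with $a'\sqsubseteq a$ and $b'\sqsubseteq b$ splits on the value of $\min(a,b)$:
\begin{itemize}
\item If $\min(a,b)=+1$, then $a=b=+1$. Hence $a',b'\geq 0$, so $\min(a',b')\in\{0,+1\}$.
\item If $\min(a,b)=-1$, then some argument equals $-1$, and its weakened counterpart is $\leq 0$. Hence $\min(a',b')\leq 0$.
\item If $\min(a,b)=0$, then some argument is $0$, and its weakened counterpart is forced to be $0$. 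The other weakened argument is $\geq 0$, so $\min(a',b')=0$.
\end{itemize}
Composing monotone maps then propagates the base-case relation up the formula tree and yields $\sign(\bar\rho)\sqsubseteq\sign(\rho)$.

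I expect the only delicate point to be the $\sign(\rho)=0$ case: one must avoid asserting only $\sign(\rho)\sign(\bar\rho)\ge 0$ inductively, since that weaker invariant can fail to propagate through $\min$. Stating the invariant as $\sqsubseteq$ and using $\mu_i=0\Rightarrow\bar\mu_i=0$ resolves this.
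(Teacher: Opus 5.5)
Your proof is correct, but it is organized differently from the paper's.

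The paper does a direct structural induction on the real-valued robustness. It carries only the product invariant itself, namely $\rho>0\Rightarrow\bar\rho\ge 0$ and $\rho<0\Rightarrow\bar\rho\le 0$. The case split is at $\min$: $\rho>0$ forces both arguments positive, and $\rho<0$ forces some argument negative; $\max$ follows by symmetry. You instead first push $\sign$ through $-$, $\min$ and $\max$, so that $\sign(\rho)$ and $\sign(\bar\rho)$ become Kleene evaluations of the same formula circuit. You then obtain $\sign(\bar\rho)\sqsubseteq\sign(\rho)$ from information-monotonicity of NOT/AND/OR. In effect you apply the input-certainty idea of Proposition~\ref{rdtlgn:prop:input-certainty} to the formula's own circuit.

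Your stated reason for this detour is mistaken, though. The weaker product invariant does propagate through $\min$ and $\max$. The case $\rho=0$ imposes nothing, and the other two cases are exactly the paper's argument. Equivalently, on $\Tset$ the relation $s\,\bar s\ge 0$ is closed under $\min$, $\max$ and negation.

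The two routes buy different things. The paper's induction needs only that the quantizer never strictly reverses a sign, so it would tolerate $\mu_i=0$ being rounded to $\pm1$. However, it concludes only the product inequality, and that inequality by itself still allows $\rho=0$ with $\bar\rho=\pm1$. Your route uses the full hypothesis $\sign(\bar\mu_i)\in\{0,\sign(\mu_i)\}$, in particular $\mu_i=0\Rightarrow\bar\mu_i=0$. It delivers $\sign(\bar\rho)\in\{0,\sign(\rho)\}$, which is precisely what the gloss ``QtC can only widen the unknown band'' asserts. So the stronger invariant is worth keeping, just not because the weaker one fails.
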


\begin{proof}[Proof sketch]
Structural induction on $\varphi$. The base case holds by the
sign-preserving assumption. For $\min$: $\rho > 0$ forces both
arguments positive, so both quantized arguments are $\geq 0$, giving
$\bar{\rho} \geq 0$. $\rho < 0$ forces at least one argument
negative, giving $\bar{\rho} \leq 0$. The $\max$ case is symmetric.
Bounded temporal operators reduce to finite $\min$/$\max$ chains.
\end{proof}

The CtQ/QtC choice is a design tradeoff between accuracy and safety.
CtQ training targets the true verdict but asks the network to commit
at timesteps where ternary inputs alone are insufficient to determine
the outcome; it relies on the learned temporal patterns to fill the
gap. QtC training targets only the structurally recoverable verdicts,
aligning the training objective with the degradation guarantees of
Section~\ref{rdtlgn:subsec:degradation}: the network is never asked to
produce a verdict that its ternary inputs cannot support.
\section{Analysis}\label{rdtlgn:sec:analysis}

We analyze the hardened R-DTLGN cell (the inference-time ternary
circuit). The analysis rests on two gate vocabularies derived from the
two orderings of Section~\ref{rdtlgn:subsec:orderings}: numerically
monotone (NM) gates govern recurrent stability, while
information-monotone (IM) gates govern degradation behavior under input
loss. We restrict the recurrent path to NM gates to promote stable training dynamics. The analysis here focuses on inference-time properties of the hardened circuit.

\subsection{STL Structure and the NM$\,\cap\,$IM Intersection}
\label{rdtlgn:subsec:info-mono}

\begin{prop}[STL recurrent operators use NM$\,\cap\,$IM gates]
\label{rdtlgn:prop:stl-info-mono}
Standard bounded STL temporal operators require \emph{recurrent
connections} composed exclusively of gates in NM$\,\cap\,$IM.
Specifically: $\square_{[a,b]}$ uses windowed AND (min) over a delay
line; $\lozenge_{[a,b]}$ uses windowed OR (max) over a delay line;
and $\varphi_1 \mathbf{U}_{[a,b]} \varphi_2$ decomposes into AND/OR
combinations with delay-line state. Negation ($\neg$) uses the
pointwise NOT gate, which is information-monotone but not numerically
monotone; however, STL semantics ensure negation is applied only to
predicates in the feedforward path, never within temporal recurrence.
\end{prop}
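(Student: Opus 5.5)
The plan has three parts: verify gate membership, give an explicit construction for each temporal operator whose recurrent edges carry only AND, OR and projections, and show that negation can be pushed out of every recurrent loop.

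\emph{Gate membership.} I first check that Kleene AND ($\min$), OR ($\max$) and the projections $\pi_1,\pi_2$ lie in NM$\,\cap\,$IM.

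For numerical monotonicity, $\min$ and $\max$ are monotone in each argument under the total order $-1\le 0\le +1$.

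For information monotonicity, I use the flatness of $(\Tset,\sqsubseteq)$. Suppose $a\sqsubseteq a'$ and $b\sqsubseteq b'$. Then either $a=a'$, or $a=0$ and $a'\in\{\pm1\}$, and likewise for $b$. It therefore suffices to show that raising one argument from $0$ to a determined value never changes an already-determined output. For $\min(0,b)$ the output is determined only when $b=-1$. In that case the output is $-1$, and $\min(a',-1)=-1$ for every $a'$. So the only possible transition is $0\to$ something, which is allowed. OR is symmetric, and the projections are trivial.

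NOT fails numerical monotonicity, since it reverses $-1<1$. It is information-monotone, because it fixes $0$ and is a bijection on $\{\pm1\}$.

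\emph{Constructions.} Next I give an explicit recurrent realization of each operator.

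For $\square_{[a,b]}\psi$, with $\psi$ supplied by the feedforward path, I keep a delay line of $b$ state trits $d_1,\dots,d_b$ with update $d_1\leftarrow \psi_t$ and $d_{k+1}\leftarrow d_k$. Each shift is a projection, which lies in NM$\,\cap\,$IM. The window value is a balanced tree of ANDs over the relevant entries, giving $\min_{\tau}\psi_\tau$, which matches the $\min$ clause of the robustness semantics \eqref{rdtlgn:eq:stl-robustness} up to the causal time shift. An alternative I would mention is a running accumulator that re-ANDs the window; it also uses only AND.

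$\lozenge_{[a,b]}$ is identical with OR.

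For $\varphi_1\mathbf{U}_{[a,b]}\varphi_2$, I use the standard unrolling
$\bigvee_{\tau\in[a,b]}\bigl(\varphi_2(\tau)\wedge\bigwedge_{\tau'<\tau}\varphi_1(\tau')\bigr)$.
This is a finite AND/OR expression over delayed copies of $\varphi_1,\varphi_2$, so it needs only delay lines (projections) together with AND and OR.

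\emph{Negation.} Finally, I argue that negation never sits inside a recurrent loop. I rewrite $\varphi$ into negation normal form using the De Morgan laws $\neg\square=\lozenge\neg$ and $\neg\lozenge=\square\neg$, and the dual of until (release, itself expressible by the same AND/OR unrolling). These laws hold in Kleene logic because NOT is an involutive De Morgan negation. After the rewrite, all negations sit on predicates $\mu$, which are computed feedforward from $p_t$ before entering any delay line. Consequently, every gate on a state-to-state path is AND, OR, or a projection.

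The main obstacle is making precise what ``require'' means: the claim is existential (such a realization exists), not that every learned circuit has this form. I would state it as exhibiting a circuit, and also check that the ternary evaluation of the unrolled formula agrees with the sign of the min/max robustness, using the sign-preservation argument of Proposition~\ref{rdtlgn:prop:conservative-quant}.
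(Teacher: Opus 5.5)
Your proposal is correct, and it proves more than the paper does. The paper gives no formal proof. It offers only the remark after the proposition: $\square$ and $\lozenge$ are realized by the self-loops $h_t = p_t \wedge h_{t-1}$ and $h_t = p_t \vee h_{t-1}$, and negation is pushed to predicates via $\neg\square\varphi \equiv \lozenge\neg\varphi$. In the paper, gate membership is asserted from the vocabulary enumeration, and Until and its dual are not treated.

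Your route differs in where the logic sits. The recurrent edges of a single operator are pure projections forming a shift register. AND/OR trees read the window from the state, and AND/OR reach state-to-state paths only through nesting. You also check NM and IM membership directly from the flatness of $\sqsubseteq$, unroll Until, and include release in the negation normal form.

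What each buys:
\begin{itemize}
\item The paper's self-loop is cheaper: one trit per operator, with AND/OR sitting literally on the recurrent edge.
\item Taken literally, though, that self-loop is an unbounded running min rather than a windowed one. With the mandated initialization $h_0=\bot$ it gives $h_t=\min(0,p_1,\dots,p_t)\le 0$, so a $\square$ monitor built this way can never certify satisfaction. Dually, the $\lozenge$ loop can never certify violation.
\item Your delay-line construction avoids both defects. It matches the statement's own wording (``windowed AND over a delay line'') and the shift-register intuition the paper gives for $B(\varphi)$.
\end{itemize}

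Your existential reading of ``require'' is the one the paper itself adopts right after the proposition (``expressively sufficient'').

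One small correction to your final step. On ternary inputs, your unrolled circuit computes the QtC robustness exactly. \cref{rdtlgn:prop:conservative-quant} then gives only sign-consistency with the continuous (CtQ) robustness: the circuit never contradicts it but may return $0$ where CtQ is determined. So ``agrees with the sign of the min/max robustness'' overstates what you get.
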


\cref{rdtlgn:prop:stl-info-mono} establishes that the NM$\,\cap\,$IM vocabulary is
expressively sufficient for exact STL monitoring: a faithful
implementation \emph{can} be built from these gates alone. The
recurrent connections use AND (for $\square$:
$h_t = p_t \wedge h_{t-1}$) and OR (for $\lozenge$:
$h_t = p_t \vee h_{t-1}$). Negation is pushed to the predicate level
by duality ($\neg\square\varphi \equiv \lozenge\neg\varphi$), acting
in the feedforward path only. The zero-absorbing behavior of AND and
OR at unknown inputs is the correct ternary monitoring semantic (an
Always monitor receiving unknown input \emph{should} produce unknown).
A learned R-DTLGN need not recover this exact structure; the
proposition ensures that restricting the gate vocabulary to
NM$\,\cap\,$IM does not sacrifice expressiveness for the target task.

\begin{remark}[Expressiveness cost]
Restricting to the 197 IM gates or the 172 NM non-constant gates
excludes the vast majority of the $19{,}683$ possible gates, but none
of the excluded gates correspond to any standard logic operation needed
for STL monitoring. The restriction costs zero expressiveness for the
STL monitoring task.
\end{remark}

\subsection{Degradation Guarantees}\label{rdtlgn:subsec:degradation}

We now state the per-timestep properties that information-monotonicity
provides. These are properties of \emph{single function evaluations},
not of dynamical trajectories across multiple timesteps. They govern
what happens when inputs to a working monitor are lost or degraded.
Crucially, these guarantees are conditional: they hold for any
subcircuit composed of IM gates, and apply to the full cell only if
all gates in the cell are IM.

Let $F : \Tset^P \times \Tset^S \to \Tset^S$ denote the hardened
R-DTLGN cell's state-update function, where $P$ is the number of
predicate inputs and $S$ the number of hidden-state trits.

\begin{prop}[Principled abstention]
\label{rdtlgn:prop:abstention}
If $F$ is composed entirely of non-constant information-monotone gates,
then $F(\bot, \bot) = \bot$. That is, when all predicate inputs are
unknown and the hidden state is all-unknown, the cell produces
all-unknown outputs. The cell never fabricates a verdict from absent
evidence.
\end{prop}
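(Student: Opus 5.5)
The plan is to reduce the statement to a one-gate lemma, that every non-constant information-monotone gate fixes the unknown value, $g(0,0)=0$, and then propagate this through the circuit by induction over its layers.

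\emph{Gate-level lemma.} Let $g:\Tset^2\to\Tset$ be information-monotone in the sense of \cref{rdtlgn:def:monotonicity}, and suppose $g(0,0)=c\neq 0$. By \cref{rdtlgn:def:info-ordering}, $0$ is the bottom of $\sqsubseteq$, so $(0,0)\sqsubseteq(a,b)$ componentwise for every $(a,b)\in\Tset^2$. Information-monotonicity then gives $c\sqsubseteq g(a,b)$ for all $(a,b)$. Since $c\in\{-1,+1\}$ is maximal in the flat poset, and $-1$ and $+1$ are incomparable, the only $v$ with $c\sqsubseteq v$ is $v=c$. Hence $g\equiv c$ is constant, contradicting the hypothesis. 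So $g(0,0)=0$. This is exactly where the non-constant hypothesis is used: the constant gates are IM but send $(0,0)$ to $\pm1$.

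\emph{Circuit-level induction.} Feed the cell $z_t=[\bot;\bot]=(0,\ldots,0)\in\Tset^{P+S}$. The claim to prove by induction on layer index $\ell$ (more generally, on any topological order of the feedforward DAG given by the connectivity map $\mathbf{C}$) is that every neuron's hardened output equals $0$. A neuron's two parents are either entries of $z_t$, which are $0$ by assumption, or outputs of earlier neurons, which are $0$ by the induction hypothesis. The lemma then gives output $g(0,0)=0$. Two technical points remain. First, in the hardened circuit each polynomial equals its gate on the grid, and $\clip$ acts as the identity on $\Tset$, so neither the forward nonlinearity (\cref{rdtlgn:eq:pst-forward}) nor the final state clip in the cell update changes a $0$. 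Second, the state coordinates $h_t=\tilde h_t^{(1:S)}$ are therefore all $0$, so $F(\bot,\bot)=\bot$. The same argument shows that the output trits $y_t$ are also $0$.

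I do not expect a real obstacle. The only care needed is making the induction cover arbitrary skip connections, which the topological-order phrasing handles, and noting that hardened gates act exactly on $\Tset$ so that clipping is harmless. The substantive content is the maximality argument in the lemma, which relies on $\pm1$ being incomparable in $\sqsubseteq$.
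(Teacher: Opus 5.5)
Your proposal is correct and follows essentially the same route as the paper. Both arguments rest on the fact that $g(0,0)\sqsubseteq g(a,b)$ for every $(a,b)$, so a determined value at the bottom would force $g$ to be constant (since $\pm 1$ are maximal and incomparable), and then the all-zero vector propagates layer by layer; your topological-order induction and your remark that clipping is the identity on $\{-1,0,+1\}$ only make explicit details the paper leaves implicit.
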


\begin{proof}
For a non-constant information-monotone gate $g$, we have
$g(0,0) = 0$: the output at bottom must be at or below all other
outputs under $\sqsubseteq$, and a non-constant gate cannot map $\bot$
to a determined value (doing so would require the output to be constant,
since $g(\bot) \sqsubseteq g(a,b)$ for all $a, b$). Composing across
all $L$ layers: each layer receives all-zero inputs and produces
all-zero outputs, so the all-unknown state propagates to the output.
\end{proof}

\begin{prop}[Input certainty monotonicity]
\label{rdtlgn:prop:input-certainty}
If $F$ is composed entirely of information-monotone gates, then for any
fixed hidden state $h \in \Tset^S$:
\begin{equation}
    p \sqsubseteq p' \quad\Longrightarrow\quad
    F(p, h) \sqsubseteq F(p', h).
\end{equation}
Replacing any predicate value with unknown ($0$) can only make the
cell's output less determined, never flip a determined verdict.
\end{prop}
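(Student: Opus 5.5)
The plan is to prove the statement by induction over the layers of the hardened circuit. The key fact is that information-monotone maps are closed under composition and under taking products. This holds because $\sqsubseteq$ on $\Tset^n$ is defined componentwise.

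First, I will fix $h \in \Tset^S$ and consider the two inputs $z = [p; h]$ and $z' = [p'; h]$ to the first layer. Since $p \sqsubseteq p'$ componentwise and $h \sqsubseteq h$ by reflexivity, we get $z \sqsubseteq z'$ in $(\Tset^{P+S}, \sqsubseteq)$.

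Second, I will show that each layer $f^{(\ell)}$ preserves $\sqsubseteq$. Take any neuron $j$ in layer $\ell$, with parents $(s_j, t_j)$ given by the connectivity map $\mathbf{C}$. Its output is $g_j(u_{s_j}, u_{t_j})$, where $u$ is the vector of values available to that layer. Parents may come from earlier layers or from the input, and I will handle both by running the induction over the union of all previously computed coordinates. If $u \sqsubseteq u'$ componentwise, then $u_{s_j} \sqsubseteq u'_{s_j}$ and $u_{t_j} \sqsubseteq u'_{t_j}$. Information-monotonicity of $g_j$ in the sense of \cref{rdtlgn:def:monotonicity} then gives $g_j(u_{s_j},u_{t_j}) \sqsubseteq g_j(u'_{s_j},u'_{t_j})$. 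Collecting these over all $j$ yields $f^{(\ell)}(u) \sqsubseteq f^{(\ell)}(u')$.

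Third, in the hardened circuit all values already lie in $\Tset$. The final $\clip$ therefore acts as the identity on $\Tset$, and projecting onto the state coordinates $1{:}S$ (or the output coordinates) preserves the componentwise order. Induction on $\ell = 1,\dots,L$ then gives $F(p,h) \sqsubseteq F(p',h)$.

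For the interpretive sentence, I will note what happens when $p'$ is obtained from $p$ by setting some coordinates to $0$. Then $p' \sqsubseteq p$, so $F(p',h) \sqsubseteq F(p,h)$. In the flat order, each coordinate of $F(p',h)$ is therefore either $0$ or equal to the corresponding coordinate of $F(p,h)$, so a determined verdict can only turn into unknown and can never flip sign.

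There is no serious obstacle; the argument is routine. The one point needing care is the layer bookkeeping: the architecture allows parents drawn from earlier layers, so the induction must be phrased over all previously computed values. I must also check explicitly that hardening leaves $\clip$ and the output readout as identity maps and projections on $\Tset$, so that they introduce no non-monotone step.
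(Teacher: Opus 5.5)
Your proof is correct and matches the argument the paper relies on. The paper gives no separate proof of this proposition, but its abstention proof and the proof of Theorem~\ref{rdtlgn:thm:kt-applied}, which invokes ``the same argument,'' both use this layer-by-layer composition of information-monotone gates under the componentwise order. Your handling of parents from earlier layers, the identity action of $\clip$ on $\Tset$, and the flat-order reading of the conclusion (each coordinate either becomes $0$ or stays unchanged) fills in details the paper leaves implicit.
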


\begin{remark}[Degradation, not prediction]
\label{rdtlgn:rmk:degradation-not-prediction}
Propositions~\ref{rdtlgn:prop:abstention}
and~\ref{rdtlgn:prop:input-certainty} are \emph{degradation}
guarantees: architectural invariants of the IM vocabulary that hold for
any learned parameterization. They govern how the monitor behaves when
input information is lost, but say nothing about prediction from
temporal patterns, which comes from the learned parameters. This
separates the R-DTLGN from standard recurrent monitors: in an RNN,
degradation behavior is at best \emph{encouraged by training} (e.g.,
via dropout augmentation) and depends on the learned weights; in the
R-DTLGN it is \emph{guaranteed by architecture} and independent of
them.
\end{remark}

\begin{remark}[Polynomial short-circuit property]
\label{rdtlgn:rmk:short-circuit}
The degradation guarantees hold exactly at ternary grid points. During
training, the PST polynomial surrogates preserve this behavior at
absorbing inputs: if a gate's truth table has a constant row at grid
point $a^*$, the degree-$(2,2)$ interpolant satisfies
$p_g(a^*, b) = c$ for all $b \in \R$ (a degree-2 polynomial with 3
roots is identically zero). This short-circuit property ensures that
the soft network's behavior at ternary inputs matches the hardened
cell's, bridging the training and inference phases for the degradation
analysis.
\end{remark}

\subsection{Initialization via Kleene Fixed-Point Theorem}
\label{rdtlgn:subsec:cross-timestep}

The per-timestep guarantees hold at every evaluation independently. We
now examine what can be said about the cell's behavior across multiple
timesteps when the gates are information-monotone.

\begin{thm}[Kleene initialization convergence]
\label{rdtlgn:thm:kt-applied}
Let $F$ be composed of information-monotone gates and fix a constant
input $p \in \Tset^P$. Define $G_p(h) = F(p, h)$. Then $G_p$ is
monotone on the finite pointed poset $(\Tset^S, \sqsubseteq)$, and the
Kleene ascending chain from the all-unknown state converges to the
least fixed point:
\[
    \bot \sqsubseteq G_p(\bot) \sqsubseteq G_p^2(\bot) \sqsubseteq
    \cdots \sqsubseteq G_p^k(\bot) = h^*
    \quad\text{for some } k \leq S.
\]
The bound $k \leq S$ holds because each strict step promotes at least
one component from $0$ to $\pm 1$, and information-monotonicity prevents
a determined component from reverting to unknown.
\end{thm}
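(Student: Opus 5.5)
The argument proceeds in three steps: monotonicity of $G_p$, then the ascending chain by induction, then the bound on its length.

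\textbf{Step 1: $G_p$ is monotone.} Information-monotone maps are closed under composition and under componentwise product, because $\sqsubseteq$ on $\Tset^n$ is defined componentwise. Each layer $f^{(\ell)}$ of the hardened cell sends every neuron to $g_j(z_{s_j}, z_{t_j})$ with $g_j$ an IM gate. If $z \sqsubseteq z'$ componentwise, then each pair of parent values is ordered, so each output trit is ordered. Hence each layer is monotone on $(\Tset^{n_{\ell-1}},\sqsubseteq)$, and so is the composition $f^{(L)}\circ\cdots\circ f^{(1)}$. The clip and the projection onto the first $S$ coordinates act as the identity or a coordinate selection on ternary values, so they also preserve $\sqsubseteq$. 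With $p$ fixed, $h \mapsto [p;h]$ is monotone, since $p\sqsubseteq p$ by reflexivity. Therefore $G_p$ is monotone. This is the same argument that underlies Proposition~\ref{rdtlgn:prop:input-certainty}, applied to the state argument instead of the input argument.

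\textbf{Step 2: the chain ascends and stabilizes at the least fixed point.} This is the standard Kleene argument. First, $\bot \sqsubseteq G_p(\bot)$ holds trivially, since $\bot$ is the least element. Applying $G_p$ repeatedly and using monotonicity gives $G_p^i(\bot)\sqsubseteq G_p^{i+1}(\bot)$ by induction on $i$.

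Next, the chain must stabilize. If $G_p^i(\bot)\ne G_p^{i+1}(\bot)$, then some coordinate is $0$ in the former and $\pm1$ in the latter, because in the flat order the only strict relations are $0 \sqsubset \pm1$. The number of zero coordinates therefore strictly decreases at each strict step. Once a coordinate is determined, the ordering $G_p^i(\bot)\sqsubseteq G_p^{i+1}(\bot)$ forces it to stay equal, since $\pm1$ is maximal. So at most $S$ strict steps can occur, and we get $G_p^k(\bot)=G_p^{k+1}(\bot)=h^*$ for some $k\le S$.

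Finally, $h^*$ is the least fixed point. For any fixed point $h'$, we have $\bot\sqsubseteq h'$, and induction gives $G_p^i(\bot)\sqsubseteq G_p^i(h')=h'$ for every $i$. In particular $h^*\sqsubseteq h'$.

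\textbf{Main obstacle.} The delicate point is the first one: whether the hardened cell $F$ really is a composition of IM gates. Two issues need checking. First, the clip in the state update must act as the identity on ternary values. Second, the connectivity must only route coordinates, so wiring introduces no non-gate operation. I would state explicitly that after hardening, all intermediate values lie in $\Tset$, so clip is trivial. Beyond that, the counting argument for $k\le S$ needs the flatness of $\sqsubseteq$, which I would cite from Definition~\ref{rdtlgn:def:info-ordering}.
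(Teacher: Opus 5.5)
Your proposal is correct and takes the same approach as the paper: you get monotonicity of $G_p$ from the same compositional argument as Proposition~\ref{rdtlgn:prop:input-certainty}, then run the Kleene ascending chain on the finite pointed poset $(\Tset^S,\sqsubseteq)$, with $k\le S$ coming from its height. The only difference is that you prove the chain, the zero-counting height bound and leastness directly rather than citing Kleene's fixed-point theorem. Because your bound uses only flatness, it also avoids the paper's loose description of this poset as a ``lattice.''
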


\begin{proof}
The partially ordered set $(\Tset^S, \sqsubseteq)$ is finite and
possesses a unique bottom element $\bot = (0,\ldots,0)$.
$G_p$ is monotone by the same argument as
Proposition~\ref{rdtlgn:prop:input-certainty} (monotonicity in $h$ for
fixed $p$). By Kleene's fixed-point theorem for monotone maps on
finite pointed domains, the ascending chain from $\bot$ converges to the least fixed
point in at most $S$ steps (the height of the lattice).
\end{proof}

\paragraph*{Scope and practical interpretation.}
Theorem~\ref{rdtlgn:thm:kt-applied} assumes constant input $p$ and
convergence from $\bot$. From non-$\bot$ states, convergence is not
guaranteed: IM gates can produce limit cycles (the
gate NOT satisfies $\text{NOT}(0) = 0$, so $\bot$ is a fixed point, but
generates a period-2 cycle
$+1 \to -1 \to +1 \to \cdots$ from determined states, since $+1$ and
$-1$ are incomparable under $\sqsubseteq$). The initialization
$h_0 = \bot$ is therefore a requirement, not merely a convention.

The constant-input assumption is more permissive than it appears. After
ternary quantization, predicates encoding semantically stable conditions
(``the agent is safe,'' ``the agent has reached the goal'') produce
piecewise-constant input sequences whose plateaus typically span many
sampling periods. During each plateau,
Theorem~\ref{rdtlgn:thm:kt-applied} guarantees convergence to the least
fixed point within $S$ steps; when a predicate transitions, the
per-timestep guarantees of
Propositions~\ref{rdtlgn:prop:abstention}--\ref{rdtlgn:prop:input-certainty}
continue to hold during the transient. This characterization weakens for
predicates that chatter near a quantization threshold.

\subsection{Minimum State Dimension}\label{rdtlgn:subsec:state-bound}

We now ask: given a bounded STL formula, how large must the R-DTLGN
cell be? The question has classical precedents: the Myhill-Nerode
theorem relates language complexity to minimum automaton state count,
and the Kalman minimal realization theorem relates input-output
behavior to minimum state dimension in linear systems.
We derive an analogous result for the R-DTLGN.

\begin{definition}[State complexity]
\label{rdtlgn:def:B-phi}
For a bounded STL formula $\varphi$, define the \emph{state complexity}
$B(\varphi)$ recursively. Let $w = b{-}a{+}1$ denote the interval width:
\begin{equation}\label{rdtlgn:eq:B-phi}
\begin{aligned}
    B(\mu) &= 0, \\
    B(\neg\varphi) &= B(\varphi), \\
    B(\varphi_1 \wedge \varphi_2) &= B(\varphi_1) + B(\varphi_2), \\
    B(\square_{[a,b]}\,\varphi) &= w + B(\varphi), \\
    B(\lozenge_{[a,b]}\,\varphi) &= w + B(\varphi), \\
    B(\varphi_1 \,\mathbf{U}_{[a,b]}\, \varphi_2) &=
        2w + B(\varphi_1) + B(\varphi_2).
\end{aligned}
\end{equation}
Disjunction is treated identically to conjunction.
\end{definition}

The intuition is straightforward. Atomic predicates require no memory
(they are external inputs). Negation is pointwise and adds no state.
Conjunction and disjunction require independent buffers for each
subformula. Each temporal operator maintains a shift register of length
$w$ over the subformula's recent evaluations; Until requires two such
registers (one per operand).

\begin{thm}[Minimum state dimension]
\label{rdtlgn:thm:min-state}
Any R-DTLGN cell capable of computing the monitoring function for a
bounded STL formula $\varphi$ requires hidden state dimension
$S \geq B(\varphi)$.
\end{thm}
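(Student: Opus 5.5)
The plan is a counting argument in the spirit of Myhill--Nerode. We exhibit, for each bounded formula $\varphi$, a family of at least $3^{B(\varphi)}$ input histories that the monitoring function must tell apart; for Boolean-valued predicates it suffices to aim for $2^{B(\varphi)}$ histories. Since the hidden state of a hardened R-DTLGN cell lives in $\Tset^S$, it takes at most $3^S$ values. Suppose two histories lead to the same state $h_t$. Because the cell is deterministic and causal, they then produce identical verdicts on every continuation. So if a family of histories is pairwise distinguishable by some future suffix, its size is at most $3^S$. The same holds for the output map, which is a function of $(p_t,h_{t-1})$, up to a one-step shift that we absorb into the window indexing.

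The proof proceeds by structural induction on $\varphi$, following the recursion in Definition~\ref{rdtlgn:def:B-phi}, with the invariant that there are $N(\varphi)\ge 2^{B(\varphi)}$ pairwise-distinguishable histories. The cases are as follows.
\begin{itemize}
\item \emph{Predicates}, $B=0$: a single history suffices.
\item \emph{Negation}: it is pointwise, and composing with NOT is a bijection on verdicts, so distinguishability is preserved.
\item \emph{Conjunction}: we use independent predicate sets for $\varphi_1$ and $\varphi_2$, which is the generic case implicit in the additive definition. We interleave distinguishing families so that a suffix can neutralize one conjunct, for example by driving it to true. This gives a product family of size $N(\varphi_1)N(\varphi_2)$.
\item \emph{$\square_{[a,b]}\varphi$}: take the last $w$ values of the subformula verdict as an arbitrary bit pattern $\sigma\in\{\pm1\}^w$. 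Let $\sigma\ne\sigma'$ differ first at a lag $j$. Choose a suffix that makes every other relevant window position true and aligns the window so that exactly lag $j$ is covered at a later timestep. The $\square$ verdict then equals $\sigma_j$ versus $\sigma'_j$, which distinguishes the two histories. This yields $2^w$ classes, multiplied by the $N(\varphi)$ classes of the inner subformula's own memory; those are placed in an earlier disjoint time segment, or equivalently in the independent coordinates.
\item \emph{$\lozenge$}: dual to $\square$.
\item \emph{Until}: buffer both operand streams over the window, obtaining $2^{2w}$ patterns with the same suffix-isolation trick. Each operand can be isolated by setting the other so that the until reduces to a windowed $\square$ or $\lozenge$ of a single operand.
\end{itemize}

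Finally, $3^S\ge N(\varphi)\ge 2^{B(\varphi)}$ gives $S\ge B(\varphi)\log_3 2$. This is weaker than the claimed $S\ge B(\varphi)$, and closing that gap is the main obstacle. What I would try first is to strengthen the family to ternary patterns $\sigma\in\Tset^w$, using the unknown value as a third distinguishable symbol. Under Kleene semantics, $\square$ of a window containing $0$ but no $-1$ yields $0$, so the three values at a lag are separable by the isolating suffix. That gives $3^{B(\varphi)}$ classes and hence exactly $S\ge B(\varphi)$.

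The delicate points are twofold. First, the inductive product must not double-count: nested temporal operators reuse time, so the inner memory and the outer window must be shown to be independent coordinates. I would handle this by choosing histories in which the inner subformula's pending obligations and the outer buffer occupy disjoint time segments. Second, the theorem needs to be read as a claim about the generic (independent-predicate) monitoring function. Shared subformulas could reduce the true minimum, so the statement should be understood for formulas with distinct predicates in each subformula, which is consistent with the additive form of $B$.
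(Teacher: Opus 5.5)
There is a genuine gap, and it is exactly the step the paper's own sketch leaves unjustified. Your route is the same as the paper's: prefixes must be distinguished, each operator ``requires a shift register of length $w$'', and registers for different operators are independent and add. The paper only asserts the register and independence claims. You try to derive them from a distinguishability count, and the derivation fails at the suffix-isolation step.

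Suppose a verdict is emitted when its window closes, which is the convention your one-step shift sets up. Then every later window reaching back to lag $j$ is a contiguous interval ending at or after the present, so it also contains lags $0,\dots,j-1$. Those values are fixed by the history, and no suffix can make them true. Future $\square$ verdicts therefore see the stored pattern $\sigma$ only through its prefix minima $\min(\sigma_0,\dots,\sigma_k)$. For example, with lag $0$ most recent, $\sigma=(-1,+1,\dots)$ and $\sigma'=(-1,-1,\dots)$ agree on every continuation. The true class count for a width-$w$ windowed AND is $w$ for Boolean inputs and $\binom{w+1}{2}$ for Kleene-ternary inputs, since a non-increasing sequence over $-1<0<+1$ is fixed by two breakpoints. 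It is not $2^w$ or $3^w$. Two capped counters (steps since the last $-1$, and since the last non-$+1$ value) realize exactly these classes in $O(\log w)$ trits. Only $S$ is constrained and two-input ternary gates are functionally complete, so a deep enough cell can implement that update.

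The same collapse spreads to the other cases. $\lozenge$ is dual. Your Until case, which you reduce to a windowed $\square/\lozenge$ of one operand, inherits it, and so does the ternary upgrade. The conjunction product fails for the same reason: a suffix cannot neutralize the \emph{stored} history of the other conjunct. For instance, $\square_{[0,1]}\mu_1\wedge\square_{[0,1]}\mu_2$ has distinct predicates and $B=4$, yet needs only the single trit $\min(\mu_1(t-1),\mu_2(t-1))$.

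No sharpening of the count can close this gap, because $S\ge B(\varphi)$ is not a property of the monitoring function at all. Since $\rho(\square_{[0,0]}\mu,x,t)=\rho(\mu,x,t)$, the formula $\square_{[0,0]}\mu$ is monitored by exactly the cell that monitors $\mu$, which by the paper's own count ($B(\mu)=0$) needs no state, yet $B(\square_{[0,0]}\mu)=1$. Nesting $k$ copies gives $B=k$, while the monitoring function stays that of $\mu$ under any output-timing convention, so the required memory does not grow with $k$. This uses a single predicate, so your distinct-predicates proviso does not rescue it.

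Two further problems point the same way. $B$ is not invariant under semantic equivalence: $\square_{[0,b]}\square_{[0,b]}\mu\equiv\square_{[0,2b]}\mu$, but their $B$ values are $2b+2$ and $2b+1$, whereas anything derived from distinguishability must be invariant. The off-by-one you absorb into the indexing is also not free: with $y_t$ computed from $(p_t,h_{t-1})$, even a naive register for a width-$w$ window stores only $w-1$ past values.

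What your Myhill--Nerode framework can honestly deliver is $S\ge\log_3 N(\varphi)$, where $N(\varphi)$ is the actual number of distinguishable prefixes. That bound is only $O(\log w)$ for $\square$ and $\lozenge$. $B(\varphi)$ measures the memory of one particular shift-register construction; it is not a lower bound.
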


\begin{proof}[Proof sketch]
The monitoring function must distinguish all signal prefixes that lead
to different future monitoring states. Each temporal operator with
interval $[a,b]$ requires a shift register of length $w$ to store the
subformula's recent evaluations, and the cell's ternary state
components are the only available memory between timesteps. The
registers for distinct operators in the parse tree must be maintained
independently. The additive total across the parse tree gives
$B(\varphi)$.
\end{proof}

\begin{remark}[Necessity, not sufficiency]
The bound $S \geq B(\varphi)$ is necessary: a cell with $S < B(\varphi)$
lacks sufficient memory. It does not claim that $S = B(\varphi)$
suffices. The gap between necessity and sufficiency arises from the
two-input gate topology, which introduces routing overhead.
\end{remark}

\subsection{Computational Depth Bound}\label{rdtlgn:subsec:depth-bound}

The state complexity $B(\varphi)$ bounds the memory (hidden state
dimension) required across timesteps. A complementary bound governs the
per-timestep computation: how many layers of PST gates are needed to
update all sliding windows and produce the current output?

\begin{prop}[Per-timestep depth]
\label{rdtlgn:prop:depth}
For a bounded STL formula $\varphi$ with nesting depth $d$ and maximum
interval length $k_{\max} = \max_{[a,b] \in \varphi}(b{-}a{+}1)$, the
minimum layer count satisfies
\begin{equation}\label{rdtlgn:eq:depth-bound}
    L \;\geq\; d \cdot \lceil \log_2 k_{\max} \rceil.
\end{equation}
\end{prop}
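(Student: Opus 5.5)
This is a fan-in (circuit-depth) argument. Every neuron of the hardened cell is a two-input ternary gate, so after $\ell$ layers any output wire depends on at most $2^\ell$ of the cell's inputs $z_t = [p_t; h_{t-1}]$. The plan has three steps: a single-operator lower bound, a composition step along a nesting chain, and a short concluding count.

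\textbf{Step 1: one temporal operator.} Consider $\square_{[a,b]}\psi$ with $w = b-a+1$ (the $\lozenge$ case is dual). I will exhibit $w$ distinct input wires on which the current verdict genuinely depends. Following the shift-register picture behind \cref{rdtlgn:thm:min-state}, the monitor state must hold the last $w$ evaluations of $\psi$, stored in $w$ distinct trits of $h_{t-1}$ (or, for the newest one, in $p_t$). The planned contradiction runs as follows:
\begin{itemize}
\item choose two histories that agree everywhere except on the value of $\psi$ at a single position $\tau$ in the window;
\item arrange that all other window entries are $+1$, so the min/AND verdict is exactly the value of $\psi$ at $\tau$;
\item conclude that the output trit is sensitive to each of the $w$ register coordinates separately.
\end{itemize}
A function sensitive to $w$ input wires cannot be computed by a fan-in-2 circuit of depth less than $\lceil \log_2 w \rceil$. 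So at least $\lceil \log_2 w\rceil \geq$ the corresponding $\lceil \log_2 k\rceil$ layers are needed for that operator alone.

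\textbf{Step 2: nesting.} Take a chain of $d$ nested temporal operators realizing the nesting depth. Take $k_{\max}$ as the interval width along that chain; this identification is where the slack lies. The current value of the inner subformula must be available before the outer window can be aggregated. More precisely, the outer aggregation needs the inner operator's fresh value at time $t$, which itself depends on $\lceil\log_2 k\rceil$ levels of fan-in. Composing these dependencies, the output depends on a dependency tree whose root-to-leaf paths traverse $d$ aggregation stages, each contributing the Step 1 bound. Summing over the chain gives $L \ge \sum_i \lceil \log_2 k_i\rceil$. Taking every $k_i$ equal to $k_{\max}$ in the worst-case formula, and reading the statement as a bound over that worst-case family, yields \eqref{rdtlgn:eq:depth-bound}.

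\textbf{Main obstacle.} The hard part is Step 2's additivity. In principle, a cell could precompute partial aggregates of the inner operator into hidden-state trits at the previous timestep, so that the outer operator reads an already-reduced value. That would collapse the per-timestep depth, since depth can be traded for state across timesteps.

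I would first try to fix this by invoking \cref{rdtlgn:thm:min-state}'s accounting. If $S = B(\varphi)$ is tight, the state holds only the raw shift registers, with no spare trits for partial aggregates. The fresh inner value must then be recomputed within the timestep, which forces the serial composition.

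If that does not go through, I would settle for proving the bound for the canonical delay-line realization of \cref{rdtlgn:prop:stl-info-mono}. Concretely, I would state explicitly that $k_{\max}$ is attained at every nesting level, and treat the general statement as holding under the minimal-state hypothesis.
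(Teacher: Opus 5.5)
The paper states this proposition without proof, so there is nothing to compare your route against. Judged on its own, your plan has a gap that cannot be closed, because the proposition is false as a lower bound over R-DTLGN cells.

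\textbf{Step 1 already fails.} Your sensitivity count assumes $h_{t-1}$ stores the last $w$ \emph{raw} values of $\psi$. But a cell is free to choose how it encodes its own state, and a fan-in argument only bounds depth relative to a fixed encoding. Take $\square_{[0,b]}\mu$ with $w=b+1$, and let the state hold the suffix minima $m_j(t)=\min(\mu(t-j+1),\dots,\mu(t))$ for $j=1,\dots,b$. Update them by $m_1(t)=\pi_1(\mu(t),\cdot)$ and $m_j(t)=\mu(t)\wedge m_{j-1}(t-1)$, and output $y_t=\mu(t)\wedge m_b(t-1)=\min(\mu(t-b),\dots,\mu(t))$ after the usual $b$-step warm-up. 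This is a single layer of AND and projection gates, all in NM$\,\cap\,$IM; it is just the windowed version of the paper's own recurrence $h_t=p_t\wedge h_{t-1}$. It uses $S=w-1<B(\varphi)$ trits and achieves $L=1$ for every $w$, contradicting $L\ge\lceil\log_2 w\rceil$ once $w\ge 3$.

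This is exactly the ``precompute partial aggregates'' move you flagged as the obstacle for Step 2, and it already defeats Step 1. Your proposed repair does not exclude it. Tightness of $S=B(\varphi)$ only forbids a raw register \emph{plus} extra aggregates, and the suffix encoding needs no raw register at all. Stacking such chains, one AND/OR layer per temporal operator, monitors nested $\square/\lozenge$ formulas with per-timestep depth $d$, independent of $k_{\max}$.

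\textbf{Step 2 fails independently, even if you grant the raw-register encoding.} Fan-in only gives $L\ge\lceil\log_2 N\rceil$, where $N$ is the total number of input wires the output depends on. This is subadditive across nesting levels: the circuit can reduce the stored outer-window values in parallel with computing the fresh inner value, then combine them at the end.

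Take $\lozenge_{[0,3]}\square_{[0,3]}\mu$ with raw registers. The output depends on $\mu(t),\dots,\mu(t-3)$ and $\psi(t-1),\dots,\psi(t-3)$, so $N=7$. A depth-3 circuit computes it:
\begin{itemize}
\item layer~1 forms $\mu(t)\wedge\mu(t-1)$, $\mu(t-2)\wedge\mu(t-3)$ and $\psi(t-1)\vee\psi(t-2)$, and passes $\psi(t-3)$ through;
\item layer~2 forms $\psi(t)$ and the outer partial maximum;
\item layer~3 takes their OR.
\end{itemize}
Your serial-composition argument, and the proposition with $k_i=k_{\max}=4$ at both levels, demand depth $4$.

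Replacing each $k_i$ by $k_{\max}$ and ``reading the statement over the worst-case family'' changes the statement rather than proving it. What is actually provable here is one of two weaker things. The first is a sufficiency statement: one particular construction, with raw delay lines and balanced reduction trees, uses about $d\lceil\log_2 k_{\max}\rceil$ layers plus layers for the Boolean connectives. The second is a $\lceil\log_2 N\rceil$ bound for a fixed raw encoding. Your fallback of proving the bound ``for the canonical delay-line realization'' would establish the first. That is an upper bound on one design, not the minimum layer count the proposition asserts.
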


\noindent Together, Theorem~\ref{rdtlgn:thm:min-state} and
Proposition~\ref{rdtlgn:prop:depth} characterize the minimum cell
dimensions: $S \geq B(\varphi)$ gives the width (hidden state), and
$L \geq d \cdot \lceil \log_2 k_{\max} \rceil$ gives the depth
(layers). For the specification
$\varphi_1 = \mu_{\mathrm{safe}} \,\mathbf{U}_{[0,5]}\,
\mu_{\mathrm{goal}}$ ($w = 6$, $d = 1$): $B(\varphi_1) = 12$ and
$L \geq \lceil \log_2 6 \rceil = 3$.

\section{Results}\label{rdtlgn:sec:results}

\newcommand{\tbd}{\textcolor{gray}{--}}
\newcommand{\specfont}[1]{{\small #1}}

\begin{figure}[ht]
    \centering
    \includegraphics[width=\linewidth]{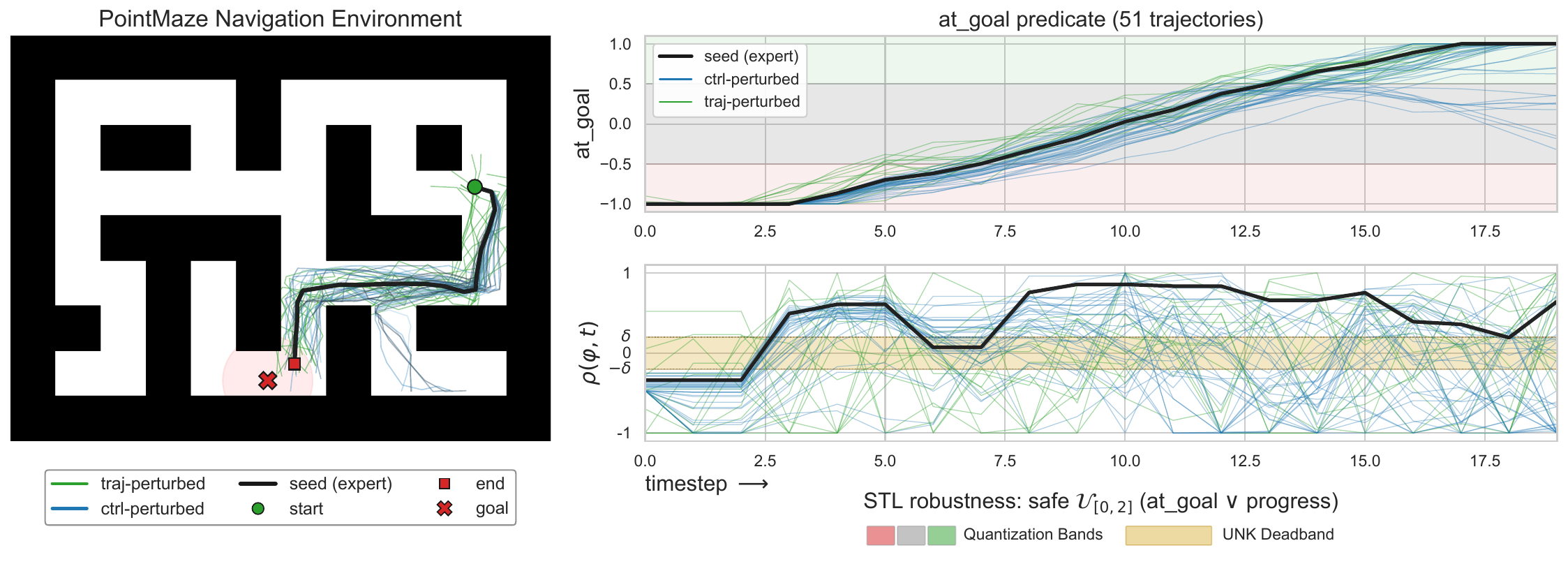}
    \caption{Experimental setup for the navigation case.}
    \label{fig:setup}
    \vspace{-0.25cm}
\end{figure}

\begin{figure}[t]
    \centering
    \includegraphics[width=\linewidth]{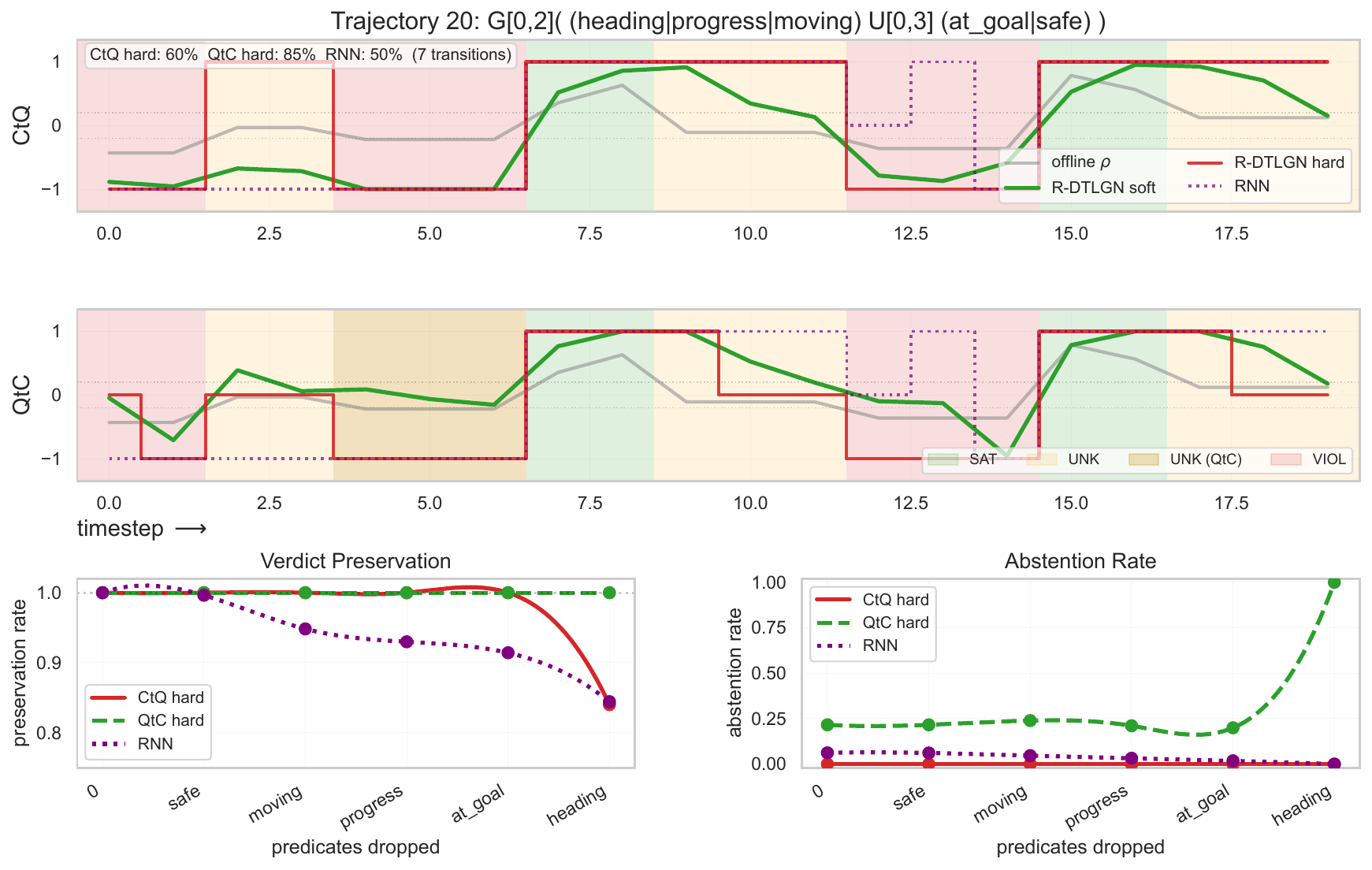}
    \caption{S06 ($P{=}5$), Trajectory 20: verdict traces (top: CtQ, middle: QtC) and
    per-predicate progressive dropout (bottom). The QtC hard circuit
    maintains perfect verdict preservation across all dropout levels
    while abstention increases monotonically
    (Props.~\ref{rdtlgn:prop:abstention}--\ref{rdtlgn:prop:input-certainty});
    the RNN shows degrading preservation with no abstention mechanism.
    Background shading: green = SAT, yellow = UNK, red = VIOL.}
    \label{fig:verdict-trace}
    \vspace{-0.5cm}
\end{figure}

We evaluate the R-DTLGN on $816$ trajectories ($T = 20$) from the D4RL
PointMaze Large maze~\cite{fu2020d4rl}, generated by perturbing seed
trajectories through physics-based resimulation and selected to ensure
balanced ternary label coverage ($\delta = 0.20$). Five predicates are
defined over $x(t) = [q, \dot{q}]$: $\mu_\text{g}$ (at goal),
$\mu_\text{s}$ (safe), $\mu_\text{m}$ (moving), $\mu_\text{h}$
(heading toward goal), $\mu_\text{p}$ (approach rate), each normalized
to $[-1,+1]$.

We construct $6$ bounded STL specifications over these predicates
(Table~\ref{rdtlgn:tab:aggregate}), spanning $2$--$4$ predicates with
closed-prefix Until semantics. All temporal horizons satisfy
$w \leq 4$. Each R-DTLGN cell is sized from the realizability bound:
$S = B(\varphi)$ hidden-state trits, $L = 6$ layers
(Theorem~\ref{rdtlgn:thm:min-state},
Proposition~\ref{rdtlgn:prop:depth}), replacing hyperparameter search
with formula-driven architecture sizing.

Each model is trained with NM-enforced PST
($\lambda_{\max} = 0.3$) under both label construction pipelines
(CtQ and QtC), then hardened via two-phase trajectory distillation
(Section~\ref{rdtlgn:subsec:traj-distill}) using the training set as
calibration data. A vanilla RNN (Elman network) with matched hidden
dimension is trained on CtQ labels as a baseline.

\begin{table*}[t]
\centering
\caption{Evaluation across 6 STL specifications ($T{=}20$, $L{=}6$,
$S{=}B(\varphi)$). \textbf{Prediction:} \emph{Causal} is the
non-learned STLCG++ baseline restricted to available observations;
\emph{RNN} is a vanilla Elman network; \emph{CtQ}/\emph{QtC} are
hard R-DTLGN circuits after two-phase distillation (soft accuracy in
parentheses). \textbf{Preservation:} mean verdict preservation under
single-predicate dropout
(Prop.~\ref{rdtlgn:prop:input-certainty} predicts $1.0$ for IM cells).
\textbf{Lattice:} compliance with input-certainty monotonicity across
all $2^P$ predicate subsets; for each subset pair $A \subset B$,
the less-informed verdict must either abstain or agree with the
more-informed verdict
(Prop.~\ref{rdtlgn:prop:input-certainty} predicts $100\%$
for fully IM cells). Not applicable to the RNN.}
\label{rdtlgn:tab:aggregate}
\vspace{2pt}
\scriptsize
\setlength{\tabcolsep}{2.4pt}
\begin{tabular}{@{}cl l cc c ccc ccc cc@{}}
\toprule
 & & & & & & \multicolumn{3}{c}{\textbf{Prediction (\%)}} &
   \multicolumn{3}{c}{\textbf{Preservation (\%)}} &
   \multicolumn{2}{c@{}}{\textbf{Lattice (\%)}} \\
\cmidrule(lr){7-9} \cmidrule(lr){10-12} \cmidrule(l){13-14}
 & \textbf{Specification} & \textbf{Task} & $P$ & $S$ & \textbf{Causal} &
  \textbf{CtQ} & \textbf{QtC} & \textbf{RNN} &
  \textbf{CtQ} & \textbf{QtC} & \textbf{RNN} &
  \textbf{CtQ} & \textbf{QtC} \\
\midrule

S01 & $\square_{[0,3]}(\mu_\text{h}\;\mathbf{U}_{[0,3]}\;\mu_\text{g})$
    & head $\to$ goal
    & 2 & 12 & 45.4 & 67.7\,(67.7) & 55.4\,(77.9) & 73.4 & 89.3 & 100.0 & 90.0 & 81.0 & 100.0 \\
\midrule

S02 & $\square_{[0,2]}((\mu_\text{h}{\vee}\mu_\text{p})\;\mathbf{U}_{[0,3]}\;\mu_\text{g})$
    & approach $\to$ goal
    & 3 & 11 & 43.7 & 76.7\,(78.7) & 76.6\,(75.9) & 76.5 & 89.2 & 88.7 & 92.8 & 81.0 & 80.3 \\
\midrule

S03 & $\square_{[0,2]}((\mu_\text{h}{\vee}\mu_\text{s})\;\mathbf{U}_{[0,3]}\;(\mu_\text{g}{\vee}\mu_\text{p}))$
    & safe $\to$ arrive
    & 4 & 11 & 29.9 & 68.8\,(69.0) & 64.4\,(71.1) & 66.9 & 93.3 & 100.0 & 91.0 & 92.3 & 100.0 \\
S04 & $\square_{[0,2]}((\mu_\text{h}{\vee}\mu_\text{p})\;\mathbf{U}_{[0,3]}\;(\mu_\text{g}{\vee}\mu_\text{s}))$
    & approach $\to$ safe
    & 4 & 11 & 46.2 & 73.2\,(74.1) & 55.8\,(74.9) & 71.5 & 88.7 & 95.3 & 91.3 & 78.3 & 91.1 \\
\midrule

S05 & $\square_{[0,2]}((\mu_\text{h}{\vee}\mu_\text{s})\;\mathbf{U}_{[0,3]}\;\mu_\text{g}) \wedge \lozenge_{[0,3]}(\mu_\text{p}{\vee}\mu_\text{m})$
    & safe $\to$ goal $+$ move
    & 5 & 15 & 30.1 & 61.0\,(67.8) & 61.3\,(63.6) & 70.3 & 92.9 & 96.4 & 88.5 & 85.9 & 87.0 \\
S06 & $\square_{[0,2]}((\mu_\text{h}{\vee}\mu_\text{p}{\vee}\mu_\text{m})\;\mathbf{U}_{[0,3]}\;(\mu_\text{g}{\vee}\mu_\text{s}))$
    & active $\to$ safe
    & 5 & 11 & 33.8 & 71.7\,(71.0) & 70.4\,(69.4) & 69.7 & 96.8 & 100.0 & 94.2 & 93.9 & 99.7 \\
\midrule
\multicolumn{3}{@{}l}{\textbf{Mean}} & & &
  38.2 & 69.8 & 64.0 & 71.4 & 91.7 & 96.7 & 91.3 & 85.4 & 93.0 \\
\bottomrule
\end{tabular}
\vspace{-8pt}
\end{table*}

\paragraph{Crossing the causality gap}
The nested temporal structure creates a large gap between what a causal
evaluator can resolve and what the monitoring task requires: the causal
STLCG++ baseline averages $38.2\%$ across the 6 specifications, with
values as low as $29.9\%$ (S03). Both the R-DTLGN and the RNN bridge
this gap through learned temporal patterns. On S02, the CtQ hard
circuit reaches $76.7\%$ against a causal floor of $43.7\%$, a
$33$\,pp improvement; on S06, the CtQ hard circuit reaches $71.7\%$
against $33.8\%$, a $38$\,pp improvement. CtQ matches or exceeds the
RNN on four of six specifications (S02, S03, S04, S06), while operating
in a discrete ternary domain that admits structural degradation analysis.

\paragraph{Hardening and distillation}
Two-phase trajectory distillation
(Section~\ref{rdtlgn:subsec:traj-distill}) converts the soft PST
network to a hardened ternary circuit. The hardening gap varies
substantially across specifications: CtQ loses less than $2$\,pp on
S02 and S03, while QtC loses $22.5$\,pp on S01 (from $77.9\%$
soft to $55.4\%$ hard). On S06, both pipelines show a negative
hardening gap (hard exceeds soft), suggesting distillation acts as a
regularizer for that specification. NM$\,\cap\,$IM gate compliance
after the second distillation sweep ranges from $76$--$86\%$, with
higher compliance generally correlating with smaller accuracy loss.
The hardened circuit is a pure ternary logic network with no
floating-point arithmetic, directly realizable as an ASIC.

\paragraph{Degradation behavior}
We evaluate input-certainty monotonicity
(Proposition~\ref{rdtlgn:prop:input-certainty}) on the distilled
circuits through two tests. \emph{Verdict preservation} measures
the fraction of timesteps where masking a single predicate to $0$
does not flip the verdict sign. \emph{Lattice monotonicity} tests
the full proposition across all $2^P$ predicate subsets.
The proposition predicts $100\%$ for both metrics on fully IM cells;
in practice, NM$\,\cap\,$IM compliance after distillation ranges from
$76$--$86\%$, so deviations are expected and observed.

\emph{Preservation} is the more consistent result. QtC achieves
$\geq 88.7\%$ preservation on every specification, reaching $100\%$ on
S01, S03, and S06. CtQ preservation ranges from $88.7\%$ to $96.8\%$,
with a mean of $91.7\%$.
In both cases, the R-DTLGN is competitive with the RNN ($88$--$94\%$),
with QtC exceeding the RNN on five of six specifications.

\emph{Lattice monotonicity} tests
Proposition~\ref{rdtlgn:prop:input-certainty} across the full
power-set lattice of predicate subsets (all $2^P$ combinations), not
just single-predicate dropout. For every pair of subsets
$A \subset B$, the verdict under $A$ must either abstain or agree with
the verdict under $B$; violations indicate that removing information
flipped a determined verdict. QtC achieves $93.0\%$ mean lattice
compliance, reaching $100\%$ on two specifications (S01, S03) and
$99.7\%$ on S06.
CtQ averages $85.4\%$, reflecting the weaker unknown-signal coverage
in CtQ training labels. This metric has no RNN analogue: the
information ordering underlying the lattice test is specific to the
ternary domain and does not apply to continuous-valued networks.

Figure~\ref{fig:verdict-trace} illustrates the full degradation
profile on S06 ($P{=}5$), which achieves the highest compliance and
the clearest separation between pipelines. The QtC circuit maintains
perfect preservation across all dropout levels and transitions to full
abstention at blackout. The CtQ circuit preserves $96.8\%$ of verdicts
but produces $0\%$ abstention, reflecting sparser unknown coverage in
CtQ training labels. The RNN degrades steadily ($94.2\%$ preservation,
no abstention).

The CtQ and QtC pipelines occupy distinct points in the
accuracy-assurance tradeoff. CtQ approaches or matches RNN-level
accuracy but loses the structural degradation advantages that
distinguish the R-DTLGN from a standard recurrent network. QtC
preserves those advantages more reliably, at the cost of a larger
hardening gap on some specifications. In both cases, the preservation
behavior is \emph{guaranteed by architecture} for the IM-compliant
portion of the cell, not merely \emph{encouraged by training} as would
be required for an RNN to exhibit similar behavior through dropout
augmentation.
\section{Conclusion}\label{rdtlgn:sec:conclusion}

We analyzed the R-DTLGN as a causal STL monitor and
identified two structural advantages rooted in its native operation in
Kleene's three-valued domain. First, cells composed of
information-monotone gates guarantee graceful degradation under input
loss: principled abstention and input certainty monotonicity ensure that
verdicts degrade toward ``unknown'' rather than flipping to wrong
answers when predicates become indeterminate. Second, the STL formula's
temporal operator structure determines a lower bound on the cell's
hidden state dimension, replacing hyperparameter search with
formula-driven architecture sizing.

These two capabilities complement the R-DTLGN's role as a learned
monitor. Like any neural architecture trained on robustness labels, the
R-DTLGN exploits spatio-temporal patterns to predict verdicts that a
causal evaluator cannot resolve from the available observations alone. The Kleene
connection adds a layer of structural assurance: the prediction
capability bridges the causal gap, and the degradation guarantee ensures
robustness to input-level uncertainty.

\paragraph*{Limitations and future work}
The degradation guarantees require information-monotone gates;
two-phase distillation improves but does not fully close the hardening
gap. Three directions follow: gated R-DTLGN variants for longer
temporal horizons, multi-trit encodings to break the quantization
bottleneck, and DFA extraction for formal model checking of the
learned circuit.

\bibliographystyle{IEEEtran}
\bibliography{references}

\end{document}